\newtheorem{theorem}{Theorem}
\title{Stochastic natural gradient descent draws \\ posterior samples in function space}
\author{Samuel L. Smith*$^-$, Daniel Duckworth*, Semon Rezchikov$^{+\pm}$, Quoc V. Le* and Jascha Sohl-Dickstein* \\
  *Google Brain, $^-$DeepMind, $^+$Google, $^\pm$Columbia University\\
  \texttt{\{slsmith, duckworthd, qvl, jaschasd\}@google.com, skr@math.columbia.edu} \\
}
\begin{document}

\maketitle

\begin{abstract}
Recent work has argued that stochastic gradient descent can approximate the Bayesian uncertainty in model parameters near local minima. In this work we develop a similar correspondence for minibatch natural gradient descent (NGD). We prove that for sufficiently small learning rates, if the model predictions on the training set approach the true conditional distribution of labels given inputs, the stationary distribution of minibatch NGD approaches a Bayesian posterior near local minima. The temperature $T = \epsilon N / (2B)$ is controlled by the learning rate $\epsilon$, training set size $N$ and batch size $B$. However minibatch NGD is not parameterisation invariant and it does not sample a valid posterior away from local minima. We therefore propose a novel optimiser, ``stochastic NGD'', which introduces the additional correction terms required to preserve both properties.
\end{abstract}

\section{Introduction}

Stochastic gradient descent remains the dominant optimization algorithm for supervised learning, but it performs poorly when the loss landscape is ill-conditioned. Natural gradient descent (NGD) provides a robust alternative based on the functional view of learning \citep{amari1998natural}. This view observes that the model parameters $\omega$ are not important, only the function $f_{\omega}$ they represent. This function relates randomly sampled inputs $X$ to a distribution over labels $Y \sim f_{\omega}(X)$, and the goal of learning is to uncover the true conditional distribution $P(Y|X)$. We will use $\{x,y\}$ to refer to the training set, composed of individual training examples $\{x_i,y_i\}$. A single NGD update has the form,
\begin{equation}
\Delta \omega = \epsilon F_x(\omega)^{-1} \left( \frac{1}{N} \frac{dC}{d\omega} \right) \label{eqn:intro},
\end{equation}
where $C_{x,y}(\omega)$ is the summed cost function, $N$ is the training set size, and $\Delta \omega = (\omega_{t+1} - \omega_{t})$ is the parameter update. We pre-condition the gradient by the inverse Fisher information, $F_x(\omega)^{-1}$ (defined in Section 4), which we estimate over the training inputs \citep{ly2017tutorial}. As the learning rate $\epsilon \rightarrow 0$, the change in function $(f_{\omega_{t+1}}(X) - f_{\omega_{t}}(X))$ during an update becomes independent of the model parameterization \citep{martens2014new}. NGD also ensures stability, by bounding the KL-divergence between functions before and after an update, such that $D_{KL}(f_{\omega_t}(X) \mid f_{\omega_{t+1}}(X)) \leq \epsilon^2$, subject to weak requirements on the smoothness and support of $f_{\omega}(X)$ \citep{ollivier2011information}.

However these properties require full-batch gradients, while in practice we estimate the gradient over a minibatch \citep{bottou2010large, robbins1951stochastic}. Therefore in this work we analyse the influence of stochastic gradients on the stationary distribution of NGD as $\epsilon \rightarrow 0$\footnote{Note however that throughout this work we assume the uncertainty in the estimate of the Fisher is negligible.}. Remarkably, we find that if the model predictions on the training set $f_{\omega}(x_i)$ approach the true conditional distribution $P(Y|x_i)$, then this stationary distribution approaches a Bayesian posterior near local minima. The temperature $T = \frac{\epsilon N}{2B}\left(1-\frac{B}{N}\right) \approx \frac{\epsilon N}{2B}$ is controlled by the ratio of the learning rate, $\epsilon$, to the batch size, $B$.

However unlike full-batch NGD, minibatch NGD is not parameterisation invariant. To recover parameterisation invarance, we introduce additional terms which arise from the parameter dependence of the Fisher metric. We propose a novel optimiser, ``stochastic natural gradient descent" (SNGD),

\begin{eqnarray}
&&\Delta \omega_i = - \epsilon F_x(\omega)^{-1}  \left( \frac{1}{N} \frac{d\hat{C}}{d\omega}  \right)_i    \label{eqn:sngd} \\
                    && - \frac{\epsilon^2}{2B}\left(1- \frac{B}{N}\right) \left\{\sum_j \left( F_x(\omega)^{-1} \frac{dF_x}{d\omega_j} F_x(\omega)^{-1} \right)_{ij}- \frac{1}{2} \sum_j F_x(\omega)^{-1}_{ij} \, \mathrm{Tr} \left(F_x(\omega)^{-1} \frac{dF_x}{d\omega_j}\right) \right\}  \nonumber .
\end{eqnarray}

SNGD reduces to a conventional NGD step as $(\epsilon/B) \rightarrow 0$ or when $B \rightarrow N$. Meanwhile in the limit $\epsilon \rightarrow 0$ and $f_{\omega}(x_i) \rightarrow P(Y|x_i)$, Equation \ref{eqn:sngd} will preserve parameterisation invariance and draw samples from a valid Bayesian posterior throughout parameter space at temperature $T \approx \epsilon N/ (2B)$. This novel optimiser introduces a multiplicative bias to the stationary distribution, which we identify as the Jeffreys prior \citep{jeffreys1946invariant}. This popular default prior is both uninformative and invariant to the model parameterization \citep{firth1993bias, gelman2009bayes}, enabling us to sample from ``function space'', rather than parameter space. We provide a range of experiments to support our claims:

\begin{itemize}
\item We show that when the batch size $B \approx \epsilon N/2$ and $N \rightarrow \infty$, the distribution of samples from NGD is close to the Laplace approximation to the true Bayesian posterior at $T=1$.

 \item If the prior over parameters is chosen appropriately, then Bayesians recommend drawing inferences using ensembles sampled at $T=1$. We observe empirically that the test loss of ensembles drawn using NGD exhibit minima close to $B \approx \epsilon N /2$. This suggests we can analytically predict the optimal ratio between the learning rate and the batch size.
\end{itemize}

Our work builds on a number of recent works in the Bayesian community, which we discuss in Section 2. In Section 3 we derive the update rules for Langevin posterior sampling on arbitrary Riemannian manifolds. In Section 4 we consider Langevin dynamics on the Fisher manifold, noting that the parameter dependence of the Fisher information introduces a data-dependent Jeffreys prior over the parameters. In section 5 we analyse the behaviour of minibatch NGD near local minima, and we introduce our novel stochastic NGD which preserves parameterisation invariance throughout parameter space. We present our experimental results in Section 6, applying minibatch NGD to logistic regression, shallow fully connected neural networks and deep convolutional networks.
\section{Related work}

Langevin dynamics \citep{gardiner1985handbook} can be used to draw posterior samples but often mixes slowly. \citet{girolami2011riemann} proposed Riemannian Langevin dynamics, which combines Langevin dynamics and NGD to enable faster mixing when the loss landscape is poorly conditioned. However their method still requires exact gradients and remains prohibitively expensive for large training sets. \citet{welling2011bayesian} proposed SGLD, the first algorithm to draw posterior samples with stochastic gradients. \citet{patterson2013stochastic} combined SGLD with Riemannian Langevin dynamics while \citet{li2016preconditioned} combined SGLD with RMSprop. These methods scale well to large datasets but they still mix slowly, since the learning rate must satisfy the Robbins-Munro conditions \citep{robbins1951stochastic}. \citet{ahn2012bayesian} observed that the noise inherent in stochastic gradients converges to the empirical Fisher near local minima. By explicitly accounting for this noise, they draw samples from a Gaussian approximation to the posterior with finite learning rates, while converging to the exact posterior as $\epsilon \rightarrow 0$. A similar analysis was published by \citet{mandt2017stochastic}, who set the gradient preconditioner equal to the inverse of the empirical gradient covariances. These methods enable fast mixing with stochastic gradients, but the analysis in both papers only holds near local minima.

The link between natural gradients and the Jeffreys prior is mentioned in earlier work by \citet{amari1998natural}, while \citet{mahony2001prior} analysed the relationship between Riemannian manifolds and implicit priors in online learning. While we focus on exact NGD in this work, we anticipate that our conclusions could be scaled to larger networks using approximate natural gradient methods like K-FAC \citep{martens2015optimizing} or other quasi-diagonal approximations to the Fisher \citep{marceau2017natural}. \citet{zhang2017noisy} recently proposed applying K-FAC to perform variational inference, while \citet{nado2018stochastic} combined K-FAC with SGLD for posterior sampling.

\section{Posterior sampling on Riemannian manifolds}
The Langevin equation updates the parameters in continuous time $t$ according to \citep{gardiner1985handbook},
\begin{equation}
\frac{d\omega}{dt} = - \frac{dC}{d\omega}  + \eta(t) \label{eqn:post},
\end{equation}
where $C_{x,y}(\omega)$ is the summed cost function and $\eta(t)$ denotes isotropic Gaussian noise. This noise has mean $\mathbb{E}\left(\eta(t)\right) = 0$ and covariance $\mathbb{E}\left(\eta(t) \eta(t')^\top\right) = 2T I  \delta(t-t')$, where $\delta(t-t')$ is a Dirac delta function, $I$ is the identity matrix, and $T$ is a scalar known as the ``temperature''. In the infinite time limit, Equation \ref{eqn:post} will sample parameters from $P_{t \rightarrow \infty}(\omega) \propto e^{-C_{x,y}(\omega)/T}$. To draw samples numerically we introduce the learning rate $\epsilon$, and integrate over a finite step of length $\epsilon/N$ to obtain,
\begin{equation}
 \Delta \omega  = - \epsilon \left( \frac{1}{N} \frac{dC}{d\omega} + \alpha \right), \label{eqn:post_discrete}
\end{equation}
where $\alpha$ is a Gaussian random variable with mean $\mathbb{E}\left(\alpha\right) = 0$ and covariance $\mathbb{E}\left(\alpha \alpha^\top\right) = 2 T I/(\epsilon N)$. If we perform $r$ parameter updates using Equation \ref{eqn:post_discrete} and $\epsilon \rightarrow 0$, then we will draw a single sample from the distribution above as $t = r\epsilon/N \rightarrow \infty$. Meanwhile if the posterior is proportional to the exponentiated negative cost function, $P(\omega|x,y) \propto e^{-C_{x,y}(\omega)}$, then we sample parameters from this posterior when $T=1$. We now generalise the Langevin equation to a static Riemannian manifold,

\begin{equation}
\Delta \omega = - \epsilon G_{x,y}^{-1}  \left( \frac{1}{N} \frac{dC}{d\omega} + \alpha \right), \label{eqn:static}
\end{equation}

\begin{theorem}
We assume that $C_{x,y}(\omega)$ is Lipschitz continuous everywhere, and that the metric $G_{x,y}$ is positive definite. In the limit $\epsilon \rightarrow 0$, repeated application of Equation \ref{eqn:static} will sample parameters from the stationary distribution $P_{t\rightarrow \infty}(\omega) \propto e^{-C_{x,y}(\omega)/T}$, if $\mathbb{E}(\alpha) = 0$ and $\mathbb{E}(\alpha \alpha^\top) = 2 T G_{x,y} /(\epsilon N)$.
\end{theorem}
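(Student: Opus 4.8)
The plan is to pass to the continuum limit of Equation \ref{eqn:static}, identify the stochastic differential equation it approximates as $\epsilon \to 0$, and then verify directly that $e^{-C_{x,y}(\omega)/T}$ is the stationary solution of the associated Fokker--Planck equation. Setting $t = r\epsilon/N$, one step advances time by $\Delta t = \epsilon/N$: the deterministic part of Equation \ref{eqn:static} contributes a drift rate $-G_{x,y}^{-1}\,\frac{dC_{x,y}}{d\omega}$, while the noise term $-\epsilon G_{x,y}^{-1}\alpha$ has covariance $\epsilon^2 G_{x,y}^{-1}\,\mathbb{E}(\alpha\alpha^\top)\,G_{x,y}^{-1} = \frac{2\epsilon T}{N}\,G_{x,y}^{-1} = 2 T G_{x,y}^{-1}\,\Delta t$. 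Hence the chain converges to the It\^o diffusion $d\omega = -G_{x,y}^{-1}\frac{dC_{x,y}}{d\omega}\,dt + \sqrt{2T}\,G_{x,y}^{-1/2}\,dW$ with constant diffusion matrix $D = T G_{x,y}^{-1}$; because $G_{x,y}$ is independent of $\omega$ here, there is no It\^o--Stratonovich ambiguity and no spurious drift correction (this subtlety will reappear in Section 4, where the metric becomes parameter dependent).

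The computational heart of the argument is then to check that $P_{\mathrm{ss}}(\omega) \propto e^{-C_{x,y}(\omega)/T}$ annihilates the Fokker--Planck operator. Writing the Fokker--Planck equation as a continuity equation $\partial_t P = -\nabla\cdot J$ with probability current $J_i = -\big(G_{x,y}^{-1}\tfrac{dC_{x,y}}{d\omega}\big)_i\, P - T\sum_j (G_{x,y}^{-1})_{ij}\,\partial_{\omega_j} P$, one substitutes the candidate and uses $\partial_{\omega_j} P_{\mathrm{ss}} = -\tfrac{1}{T}\,\tfrac{dC_{x,y}}{d\omega_j}\,P_{\mathrm{ss}}$; the two contributions to each $J_i$ cancel pointwise (here the constancy of $G_{x,y}$ is exactly what lets $G_{x,y}^{-1}$ commute past the derivative), so $J \equiv 0$ and $P_{\mathrm{ss}}$ is stationary, indeed it satisfies detailed balance. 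This step is routine, and positive-definiteness of $G_{x,y}$ enters only to ensure $G_{x,y}^{-1}$ exists and the diffusion is non-degenerate.

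The main obstacle is the order of limits rather than this verification. First, one must argue the diffusion is ergodic with $P_{\mathrm{ss}}$ as its unique stationary law: non-degeneracy of $D = T G_{x,y}^{-1}$ (ellipticity), together with the Lipschitz bound on $C_{x,y}$ (linear growth of the drift, hence non-explosion and well-posed strong solutions) and normalisability of $e^{-C_{x,y}/T}$ --- an implicit confinement hypothesis needed even to make $P_{\mathrm{ss}}$ a probability density --- give geometric ergodicity by standard results for Langevin diffusions. Second, the discrete chain \ref{eqn:static} carries an $O(\epsilon)$ discretisation bias, so its stationary law coincides with $P_{\mathrm{ss}}$ only as $\epsilon \to 0$; making this precise amounts to showing the generator of the time-rescaled chain converges to the generator of the SDE, invoking weak convergence of the chain to the diffusion on compact time intervals, and combining this with the uniform-in-$\epsilon$ ergodicity above to conclude that the law after $r$ steps with $t = r\epsilon/N \to \infty$ converges weakly to $P_{\mathrm{ss}}$. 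In a rigorous write-up I would state the confinement and ellipticity conditions as standing hypotheses and cite the standard ergodicity theory, concentrating the genuinely new content in the continuum limit and the Fokker--Planck calculation.
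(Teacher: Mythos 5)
Your argument is correct, but it is not the route the paper takes for Theorem 1. The paper's proof is a one-line reduction: since $G_{x,y}$ is constant and positive definite, write $G_{x,y}=B_{x,y}B_{x,y}^\top$ and perform a global linear change of coordinates that whitens the metric, so that Equation \ref{eqn:static} becomes the isotropic discretised Langevin equation \ref{eqn:post_discrete}, whose stationary distribution has already been asserted; the invariance of $e^{-C/T}$ under the (constant-Jacobian) linear map then gives the result. You instead pass to the limiting It\^o diffusion, write the Fokker--Planck equation as a continuity equation, and verify that the probability current vanishes pointwise on the candidate density --- a computation that checks out ($J_i=-(G_{x,y}^{-1}\nabla C)_iP-T(G_{x,y}^{-1})_{ij}\partial_jP\equiv 0$ for $P\propto e^{-C/T}$). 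What the paper's approach buys is brevity and reuse of Equation \ref{eqn:post}; crucially, it is only available because $G_{x,y}$ is parameter-independent, so a single global basis change exists. What your approach buys is that it is exactly the machinery needed for Theorems 2, 3 and 5, where the metric depends on $\omega$ and no such basis change is possible --- indeed the paper's appendix carries out your Fokker--Planck calculation (with the extra drift corrections) to prove Theorem 2. You also surface hypotheses the paper leaves implicit: Lipschitz continuity of $C_{x,y}$ alone does not make $e^{-C_{x,y}/T}$ normalisable, and the ergodicity and order-of-limits issues ($\epsilon\to 0$ versus $t\to\infty$) are real gaps in the stated theorem that your write-up correctly flags as requiring additional standing assumptions.
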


\begin{proof} Since $G_{x,y}$ is positive definite, $G_{x,y} = B_{x,y} B_{x,y}^\top$. To prove Theorem 1, we simply perform a basis transformation on the original Langevin equation of Equation \ref{eqn:post}, such that $\omega = B_{x,y}^\top \omega'$.  \end{proof}

Note however that the metric $G_{x,y}$ in Equation \ref{eqn:static} is constant and independent of the parameters. Meanwhile the metric of NGD is defined by the Fisher information, which is a function the parameters. A detailed discussion is beyond the scope of this paper, but great care must be taken when discretising stochastic differential equations on a non-stationary metric. To build an intuition for this, we note that the magnitude of the noise contribution in Equation \ref{eqn:static} is $O(\sqrt{\epsilon})$. We therefore cannot expand a stochastic differential equation to first order in $\epsilon$ without considering second order contributions from the noise source, and these second order contributions introduce additional correction terms which vanish if the metric derivatives are zero. We introduce these correction terms below,
\begin{eqnarray}
\Delta \omega_i &=& - \epsilon \Bigg\{ G_{x,y}(\omega)^{-1}  \left( \frac{1}{N} \frac{dC}{d\omega} + \alpha  \right)_i +  \label{eqn:massive} \\
                    && \frac{T}{N} \sum_j \left( G_{x,y}(\omega)^{-1} \frac{dG_{x,y}}{d\omega_j} G_{x,y}(\omega)^{-1} \right)_{ij}- \frac{T}{2N} \sum_j G_{x,y}(\omega)^{-1}_{ij} \, \mathrm{Tr} \left(G_{x,y}(\omega)^{-1} \frac{dG_{x,y}}{d\omega_j}\right) \Bigg\}  \nonumber ,
\end{eqnarray}

\begin{theorem}
We assume both $C_{x,y}(\omega)$ and $G_{x,y}(\omega)$ are Lipschitz continuous and $G_{x,y}(\omega)$ is positive definite. In the limit $\epsilon \rightarrow 0$, repeated application of Equation \ref{eqn:massive} will sample parameters from $P_{t\rightarrow \infty}(\omega) \propto e^{-C_{x,y}(\omega)/T} |\det{G_{x,y}(\omega)}|^{1/2}$, if $\mathbb{E}(\alpha) = 0$ and $\mathbb{E}(\alpha \alpha^\top) = 2 T G_{x,y}(\omega) /(\epsilon N)$.
\end{theorem}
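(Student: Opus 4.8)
The plan is to make precise the continuous-time heuristic that follows Theorem 1: pass from the discrete stochastic map in Equation \ref{eqn:massive} to a Fokker--Planck equation via a short-time (Kramers--Moyal) expansion, and then verify by direct substitution that the claimed density is a stationary solution. The first step is careful bookkeeping in powers of $\epsilon$. One step of Equation \ref{eqn:massive} advances time by $dt = \epsilon/N$. The noise contribution $-\epsilon\, G_{x,y}(\omega)^{-1}\alpha$ has zero (conditional) mean and covariance $\epsilon^2 G_{x,y}^{-1}\,\mathbb{E}(\alpha\alpha^\top)\,G_{x,y}^{-1} = (2\epsilon T/N)\,G_{x,y}^{-1} = 2T\,G_{x,y}^{-1}\,dt$, while the deterministic part of $\Delta\omega$ is $O(\epsilon) = O(dt)$. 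Because $\alpha$ has variance $O(1/\epsilon)$, the $n$-th noise jump moment is $O(\epsilon^{n/2})$, hence $o(dt)$ for $n\ge 3$; the drift--noise cross term vanishes (the noise has zero conditional mean), and the drift--drift contribution to the second moment is $O(\epsilon^2) = o(dt)$. Retaining only the first two Kramers--Moyal coefficients therefore gives, as $\epsilon\to0$, the Fokker--Planck equation $\partial_t P = -\sum_i\partial_i(A_iP) + \sum_{ij}\partial_i\partial_j(D_{ij}P)$ with diffusion $D_{ij} = T\,(G_{x,y}^{-1})_{ij}$ and drift $A_i = \mathbb{E}(\Delta\omega_i)/dt$ read off from the deterministic terms of Equation \ref{eqn:massive}.

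Next I would simplify this drift. Writing $g^{ij} = (G_{x,y}^{-1})_{ij}$ and using $\partial_j G_{x,y}^{-1} = -G_{x,y}^{-1}(\partial_j G_{x,y})G_{x,y}^{-1}$ together with Jacobi's formula $\partial_j\ln|\det G_{x,y}| = \mathrm{Tr}(G_{x,y}^{-1}\partial_j G_{x,y})$, the term $\sum_j(G_{x,y}^{-1}\frac{dG_{x,y}}{d\omega_j}G_{x,y}^{-1})_{ij}$ collapses to $-\sum_j\partial_j g^{ij}$ and the trace term to $\tfrac12\sum_j g^{ij}\partial_j\ln|\det G_{x,y}|$, so that $A_i = -\sum_j g^{ij}\partial_j C_{x,y} + T\sum_j\partial_j g^{ij} + \tfrac T2\sum_j g^{ij}\,\mathrm{Tr}(G_{x,y}^{-1}\partial_j G_{x,y})$. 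A sufficient condition for stationarity is that the probability current $J_i = A_i P - \sum_j\partial_j(D_{ij}P)$ vanishes identically, since then $\partial_t P = -\sum_i\partial_i J_i = 0$. Substituting $P_s \propto e^{-C_{x,y}/T}|\det G_{x,y}|^{1/2}$ and differentiating, $e^{-C_{x,y}/T}$ contributes $-\tfrac1T\partial_j C_{x,y}$, the volume factor contributes $\tfrac12\mathrm{Tr}(G_{x,y}^{-1}\partial_j G_{x,y})$, and $\partial_j$ acting on $g^{ij}$ contributes $\partial_j g^{ij}$; one finds $\sum_j\partial_j(D_{ij}P_s) = P_s\big(T\sum_j\partial_j g^{ij} - \sum_j g^{ij}\partial_j C_{x,y} + \tfrac T2\sum_j g^{ij}\mathrm{Tr}(G_{x,y}^{-1}\partial_j G_{x,y})\big) = A_i P_s$, so $J_i \equiv 0$. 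Positive-definiteness of $G_{x,y}$ makes the diffusion non-degenerate, which with the Lipschitz hypotheses guarantees (assuming $P_s$ is normalisable) that $P_s$ is the unique stationary density and that the dynamics is ergodic; hence repeated application of Equation \ref{eqn:massive} with $\epsilon\to0$ samples $P_s$, as claimed.

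The delicate point is the first step. The noise in Equation \ref{eqn:massive} is multiplicative and state-dependent, so the expansion genuinely requires going to second order in the noise --- the ``second order contributions'' flagged in the text --- and one must confirm that evaluating $G_{x,y}^{-1}(\omega)$ and $\alpha$ at the \emph{current} iterate hides no additional $O(dt)$ drift beyond the explicit correction terms, i.e.\ that Equation \ref{eqn:massive} is precisely the It\^o discretisation whose drift $A_i$ is the deterministic part above. The correction terms are engineered so that this drift, run through the current-free condition, reproduces $e^{-C_{x,y}/T}$ times the Riemannian volume element $|\det G_{x,y}|^{1/2}$ rather than $e^{-C_{x,y}/T}$ alone; this is the mechanism behind Riemannian Langevin dynamics \citep{girolami2011riemann}, with the correction chosen here so as to retain the Jeffreys factor. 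Everything downstream of the Fokker--Planck equation is routine matrix calculus.
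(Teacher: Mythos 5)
Your proposal is correct and follows essentially the same route as the paper's appendix proof: interpret Equation \ref{eqn:massive} as the Euler (It\^o) discretisation of a Riemannian Langevin SDE, pass to the It\^o Fokker--Planck equation, and verify $P_s \propto e^{-C_{x,y}/T}\lvert\det G_{x,y}\rvert^{1/2}$ by direct substitution using $\partial_j G^{-1} = -G^{-1}(\partial_j G)G^{-1}$ and Jacobi's formula. You simply make explicit two steps the paper leaves implicit --- the Kramers--Moyal moment bookkeeping that justifies the Fokker--Planck limit, and the vanishing of the probability current --- and you handle the temperature directly rather than by the paper's rescaling $C\rightarrow C'/T$, $\epsilon\rightarrow\epsilon' T$.
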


This theorem is discussed further in the appendix. Notice that the stationary distribution is modified by an implicit bias, $|\det{G_{x,y}(\omega)}|^{1/2}$, which arises from the parameter dependence of the metric. If one does not wish to introduce this implicit bias, one could instead apply the following update rule,
\begin{eqnarray}
\Delta \omega_i &=& - \epsilon \Bigg\{ G_{x,y}(\omega)^{-1}  \left( \frac{1}{N} \frac{dC}{d\omega} +\alpha \right)_i + \frac{T}{N} \sum_j \left( G_{x,y}(\omega)^{-1} \frac{dG_{x,y}}{d\omega_j} G_{x,y}(\omega)^{-1} \right)_{ij} \Bigg\}  . \label{eqn:flat}
\end{eqnarray}

\begin{theorem}
We assume both $C_{x,y}(\omega)$ and $G_{x,y}(\omega)$ are Lipschitz continuous and $G_{x,y}(\omega)$ is positive definite. In the limit $\epsilon \rightarrow 0$, repeated application of Equation \ref{eqn:flat} will sample parameters from $P_{t\rightarrow \infty}(\omega) \propto e^{-C_{x,y}(\omega)/T} $, if $\mathbb{E}(\alpha) = 0$ and $\mathbb{E}(\alpha \alpha^\top) = 2 T G_{x,y}(\omega) /(\epsilon N)$.
\end{theorem}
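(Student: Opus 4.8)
The plan is to mirror the Fokker--Planck argument behind Theorem 2. First I would pass to the continuum limit by fixing $t = r\epsilon/N$ and letting $\epsilon \to 0$, so that \eqref{eqn:flat} becomes an Euler--Maruyama discretisation of an It\^o diffusion. Writing $\partial_j$ for $\partial/\partial\omega_j$ and using $\partial_j(G_{x,y}^{-1}) = -\,G_{x,y}^{-1}(\partial_j G_{x,y})\,G_{x,y}^{-1}$, the first two infinitesimal moments of an update are
\[
\mathbb{E}[\Delta\omega_i] = \frac{\epsilon}{N}\,b_i + o(\epsilon), \qquad b_i = -\Big(G_{x,y}^{-1}\tfrac{dC}{d\omega}\Big)_i + T\sum_j\partial_j\big(G_{x,y}^{-1}\big)_{ij},
\]
\[
\mathbb{E}[\Delta\omega_i\,\Delta\omega_j] = \frac{\epsilon}{N}\,2T\,\big(G_{x,y}^{-1}\big)_{ij} + o(\epsilon),
\]
with every higher moment $O(\epsilon^2)$ or smaller, since each extra factor carries an additional $\epsilon$ against the noise variance $\propto 1/\epsilon$. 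Hence the density of $\omega$ evolves under the forward Kolmogorov equation $\partial_t p = -\sum_i\partial_i(b_i p) + T\sum_{i,j}\partial_i\partial_j\big((G_{x,y}^{-1})_{ij}\,p\big)$.

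Next I would verify that $p_{\mathrm{ss}}(\omega)\propto e^{-C_{x,y}(\omega)/T}$ annihilates the right-hand side. Expanding the diffusion term and substituting $\partial_j p_{\mathrm{ss}} = -(1/T)(\partial_j C_{x,y})\,p_{\mathrm{ss}}$ rewrites it as $T\sum_i\partial_i\big[(\sum_j\partial_j(G_{x,y}^{-1})_{ij})\,p_{\mathrm{ss}}\big] - \sum_i\partial_i\big[(G_{x,y}^{-1}\tfrac{dC}{d\omega})_i\,p_{\mathrm{ss}}\big]$, which is exactly $\sum_i\partial_i(b_i\,p_{\mathrm{ss}})$ term by term, so the Fokker--Planck operator kills $p_{\mathrm{ss}}$. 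Positive definiteness of $G_{x,y}$ makes the generator elliptic, so the diffusion is ergodic with $p_{\mathrm{ss}}$ as its unique invariant density, giving $P_{t\to\infty}(\omega)\to p_{\mathrm{ss}}$; the Lipschitz hypotheses on $C_{x,y}$ and $G_{x,y}$ are what render the limiting SDE well posed and guarantee weak convergence of the scheme.

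A shortcut is also worth recording. By Jacobi's formula $\mathrm{Tr}(G_{x,y}^{-1}\partial_j G_{x,y}) = \partial_j\ln|\det G_{x,y}|$, the update \eqref{eqn:flat} is precisely the update \eqref{eqn:massive} of Theorem 2 with the extra drift $-\tfrac{\epsilon}{N}\sum_j (G_{x,y}^{-1})_{ij}\,\partial_j\big(\tfrac{T}{2}\ln|\det G_{x,y}|\big)$ appended, i.e.\ \eqref{eqn:massive} with $C_{x,y}$ replaced by $C_{x,y} + \tfrac{T}{2}\ln|\det G_{x,y}|$. Theorem 2 then gives the stationary density $\propto e^{-(C_{x,y} + \frac{T}{2}\ln|\det G_{x,y}|)/T}\,|\det G_{x,y}|^{1/2} = e^{-C_{x,y}/T}$, as claimed; the modified cost is still Lipschitz when $G_{x,y}$ is Lipschitz and positive definite.

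The delicate step is the passage to the continuum limit. Because the noise in \eqref{eqn:static} is $O(\sqrt\epsilon)$, the recursion cannot be expanded naively to first order in $\epsilon$: the $O(\epsilon)$ term $\tfrac{T}{N}\sum_j(G_{x,y}^{-1}\tfrac{dG_{x,y}}{d\omega_j}G_{x,y}^{-1})_{ij}$ in \eqref{eqn:flat} is engineered exactly so that the surviving $O(\epsilon)$ Kramers--Moyal coefficients assemble into the Fokker--Planck equation above rather than one carrying a spurious $|\det G_{x,y}|^{\pm 1/2}$ factor. Pinning down which moments contribute at $O(\epsilon)$ and with what constants is the crux; this is the same bookkeeping done for Theorem 2 in the appendix, re-run here with the trace term dropped (or avoided altogether via the reduction above).
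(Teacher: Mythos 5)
Your proposal is correct, and the ``shortcut'' you record at the end is precisely the paper's proof: the authors take Theorem 2, substitute $C_{x,y}\rightarrow C'_{x,y}+(T/2)\ln|\det G_{x,y}|$ so that the stationary density $e^{-(C'_{x,y}+\frac{T}{2}\ln|\det G_{x,y}|)/T}\,|\det G_{x,y}|^{1/2}$ collapses to $e^{-C'_{x,y}/T}$, and use Jacobi's formula $\partial_j\ln|\det G_{x,y}|=\mathrm{Tr}(G_{x,y}^{-1}\partial_j G_{x,y})$ to check that the induced extra drift exactly cancels the trace term of Equation \ref{eqn:massive}, leaving Equation \ref{eqn:flat}. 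Your primary route --- computing the first two Kramers--Moyal coefficients of Equation \ref{eqn:flat} directly and verifying that $e^{-C_{x,y}/T}$ annihilates the resulting Fokker--Planck operator --- is a genuinely different presentation. It is self-contained: it does not lean on Theorem 2, whose own justification the paper defers to the appendix, and it makes explicit which $O(\epsilon)$ terms survive the discretisation. The reduction, by contrast, is a one-line argument but inherits whatever rigor Theorem 2 has. Your drift $b_i=-(G_{x,y}^{-1}\,dC/d\omega)_i+T\sum_j\partial_j(G_{x,y}^{-1})_{ij}$ and diffusion $2T(G_{x,y}^{-1})_{ij}$ are correct (the identity $\sum_j(G_{x,y}^{-1}\partial_jG_{x,y}\,G_{x,y}^{-1})_{ij}=-\sum_j\partial_j(G_{x,y}^{-1})_{ij}$ converts the correction term of Equation \ref{eqn:flat} into the divergence form you use), and the term-by-term cancellation in the stationarity check goes through. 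The remaining caveats are shared with the paper rather than specific to you: ergodicity and uniqueness of the invariant density, and the rigorous weak-convergence of the Euler scheme, are asserted rather than proved in both arguments; and $\ln|\det G_{x,y}|$ is Lipschitz only where the eigenvalues of $G_{x,y}$ are locally bounded away from zero, which positive definiteness plus continuity supplies on compact sets.
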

\begin{proof}
Consider Theorem 2, and let $C_{x,y}(\omega) \rightarrow C'_{x,y}(\omega)+ (T/2) \ln{|\det{G_{x,y}|}}$. This modified cost function removes the implicit bias from the stationary distribution, such that $P_{t\rightarrow \infty}(\omega) \propto e^{-C'_{x,y}(\omega)} $, and by directly evaluating the derivative of $\ln{|\det{G_{x,y}}|}$ we arrive at Equation \ref{eqn:flat}.
\end{proof}

\section{Preconditioned Langevin dynamics and the Jeffreys prior}
In this Section, we apply Theorem 2 to draw posterior samples with full batch gradient estimates on the Fisher manifold. For simplicity, we assume the cost function $C_{x,y}(\omega) = \sum_i \mathcal{L}(\omega, x_i, y_i) + \lambda R_{x}(\omega)$, where $R_{x}(\omega)$ is a regularizer and $\mathcal{L}(\omega,x_i,y_i) = - \ln P(y_i|x_i,\omega)$ is the cross-entropy of a unique categorical label. We set the metric $G_{x,y}(\omega)$ equal to the Fisher information,
\begin{eqnarray}
F_x(\omega) &=& \frac{1}{N} \sum_{i=1}^N \mathbb{E}_{Y \sim f_{\omega}(x_i)}\left( \frac{d \mathcal{L}(\omega, x_i, Y)}{d\omega} \frac{d \mathcal{L}(\omega,x_i,Y)}{d\omega^\top} \right) .
\end{eqnarray}
Notice that the expectation is taken over the predicted label distribution given the current parameter values, $f_{\omega}(x_i)$, not the empirical labels $y_i$, and therefore $F_x(\omega)$ depends only on the training inputs $x$. Following Theorem 2, for sufficiently small $\epsilon$ repeated application of Equation \ref{eqn:massive} will draw samples from $P_{t\rightarrow \infty}(\omega) \propto e^{-C_{x,y}(\omega)/T} |F_{x}(\omega)|^{1/2}$. The data likelihood $P(y|x,\omega) = e^{-\sum_i \mathcal{L}(\omega,x_i,y_i)}$. We introduce the temperature adjusted posterior $P_T(\omega|x,y) \propto P(y|x,\omega)^{1/T} P_T(\omega|x)$, and recall that we obtain the true posterior $P(\omega|x,y) = P_1(\omega|x,y)$ when $T=1$. Notice that the temperature adjusted prior $P_T(\omega|x)$ may depend on the training inputs $x$ but not the labels $y$. We set $P_{t\rightarrow \infty}(\omega) = P_T(\omega| x, y)$ to identify,
\begin{equation}
P_T(\omega|x) \propto e^{- \lambda R_x(\omega)/T} |F_{x}(\omega)|^{1/2}.
\end{equation}
The multiplicative bias introduced by the Fisher metric modifies the prior imposed by the regulariser. When $\lambda \rightarrow 0$, we obtain a temperature independent prior $P(\omega|x) \propto |F_{x}(\omega)|^{1/2}$, which we identify as the Jeffreys prior; a common default prior which is both uninformative and invariant to the model parameterization \citep{jeffreys1946invariant, firth1993bias}. Like the uniform prior, the Jeffreys prior is improper. It may therefore be necessary to include additional regularisation to ensure that the posterior is well-defined. Notice that the contribution to the prior from the regulariser grows weaker as the temperature increases, matching the temperature dependence of the likelihood term. Consequently the contribution to the prior from the metric becomes increasingly dominant as the temperature rises.

Note that we can only reinterpret the multiplicative bias as a prior if the gradient pre-conditioner is independent of the training labels. However practitioners often replace the Fisher information by the empirical Fisher information, estimating the inner expectation above over the training set labels $y$,
\begin{equation}
F_{x,y}(\omega) = \frac{1}{N} \sum_{i=1}^N \left( \frac{d \mathcal{L}(\omega,x_i,y_i)}{d\omega} \frac{d \mathcal{L}(\omega,x_i,y_i)}{d\omega^\top} \right) .
\end{equation}
In this case, Equation \ref{eqn:massive} will draw samples from $e^{-C_{x,y}(\omega)/T} |\det{F_{x,y}}|^{1/2}$ but we can only interpret this as a Bayesian posterior if the empirical Fisher is constant (for instance near local minima), in which case the correction terms vanish and we recover the simpler update rule of Equation \ref{eqn:static}. We could draw valid posterior samples under a uniform prior with the empirical fisher and Equation \ref{eqn:flat}.

\section{Stochastic natural gradients and minibatch noise}
We showed in Section 4 how to draw posterior samples with full batch natural gradients and Gaussian noise. This required us to introduce additional correction terms, as shown in Equation \ref{eqn:massive}. However in practice we usually estimate the gradient over a minibatch. The minibatch NGD update has the form,
\begin{eqnarray}
\Delta \omega  = - \epsilon F_{x}(\omega)^{-1} \left( \frac{1}{N} \frac{dC}{d\omega} + \beta \right), \label{eqn:sngd_step}
\end{eqnarray}
where the gradient noise,
\begin{eqnarray}
\beta &=& \frac{1}{B} \left( \sum_{i=1}^B \frac{d\mathcal{L}(\omega,x_i,y_i)}{d\omega} \right) - \frac{1}{N} \left( \sum_{i=1}^N \frac{d\mathcal{L}(\omega,x_i,y_i)}{d\omega} \right) \label{eqn:minibatch}.
\end{eqnarray}
We assume the training set is randomly sorted between each update, such that Equation \ref{eqn:minibatch} samples a minibatch of $B$ training examples without replacement. Below we prove Theorem 4,

\begin{theorem}
We assume $C_{x,y}(\omega)$ is Lipschitz continuous everywhere, and that the Fisher information matrix $F_x(\omega)$ is positive definite. Training inputs are drawn from a fixed distribution $P(X)$, while labels are assigned by a conditional distribution $P(Y|X)$. If $\epsilon \rightarrow 0$, $1 \ll B \ll N$, and $f_{\omega}(x_i) \rightarrow P(Y|x_i)$ for all observed $x_i \in x$, repeated application of Equation \ref{eqn:sngd_step} will draw samples from $P_{t\rightarrow \infty}(\omega) \propto e^{-C_{x,y}(\omega)/T}$ near local minima (where the Fisher is stationary), with $T = \frac{\epsilon N}{2B}(1- \frac{B}{N})$.
\end{theorem}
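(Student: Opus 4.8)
The plan is to reduce the minibatch recursion of Equation \ref{eqn:sngd_step} to the constant-metric Langevin update of Equation \ref{eqn:static} in a neighbourhood of a local minimum, and then invoke Theorem 1 with $G_{x,y} = F_x(\omega)$. Three ingredients are needed: that the parameter dependence of the Fisher is negligible on the relevant scale near a minimum, so that no correction terms of the type appearing in Equation \ref{eqn:massive} are required; that the minibatch gradient noise $\beta$ of Equation \ref{eqn:minibatch} is, in the $\epsilon \rightarrow 0$ diffusion limit, effectively a zero-mean Gaussian; and that its covariance is exactly $2T F_x(\omega)/(\epsilon N)$ with the stated temperature. Granting these, Theorem 1 immediately yields $P_{t\rightarrow\infty}(\omega) \propto e^{-C_{x,y}(\omega)/T}$.

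For the first two ingredients: since $T \propto \epsilon$, the width of the stationary distribution contracts like $\sqrt{\epsilon}$ as $\epsilon \rightarrow 0$, so near a local minimum $F_x(\omega)$ is constant to leading order and $dF_x/d\omega_j$ contributes only at higher order, which is precisely the regime in which the additional drift terms of Equation \ref{eqn:massive} vanish and Equation \ref{eqn:sngd_step} collapses onto Equation \ref{eqn:static}. For the Gaussianity, because $1 \ll B$ the quantity $\beta$ is the centred mean of $B$ nearly independent per-example scores, hence approximately Gaussian by the central limit theorem; its higher cumulants are $O(1/B)$ relative to the variance and, because the noise enters the update at order $\sqrt{\epsilon}$, they drop out of the $\epsilon \rightarrow 0$ limit (the same second-order-in-noise reasoning invoked for Theorem 2 in the appendix). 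By construction $\mathbb{E}(\beta) = 0$.

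For the covariance, I would compute $\mathbb{E}(\beta\beta^\top)$ for sampling without replacement, obtaining $\mathbb{E}(\beta\beta^\top) = \frac{1}{B}\left(1 - \frac{B-1}{N-1}\right)\Sigma_{x,y}(\omega)$, where $\Sigma_{x,y}(\omega)$ is the empirical covariance of the per-example gradients $d\mathcal{L}(\omega,x_i,y_i)/d\omega$; for $1 \ll B \ll N$ the prefactor is $\frac{1}{B}(1 - \frac{B}{N})$. The key step is to identify $\Sigma_{x,y}(\omega)$ with $F_x(\omega)$ under the hypothesis $f_\omega(x_i) \rightarrow P(Y|x_i)$: at the true conditional the score has zero mean, $\mathbb{E}_{Y \sim P(Y|x_i)}[d\mathcal{L}(\omega,x_i,Y)/d\omega] = 0$, so the empirical mean gradient vanishes (as $N$, or at least $B$, grows large enough for the minibatch covariance to concentrate), and $\mathbb{E}_{Y \sim P(Y|x_i)}[d\mathcal{L}/d\omega \, d\mathcal{L}/d\omega^\top]$ is exactly the per-example Fisher, so the empirical second-moment matrix converges to $F_x(\omega)$. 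Hence $\mathbb{E}(\beta\beta^\top) \rightarrow \frac{1}{B}(1 - \frac{B}{N}) F_x(\omega)$; equating this with $2T F_x(\omega)/(\epsilon N)$ gives $T = \frac{\epsilon N}{2B}(1 - \frac{B}{N})$, as claimed.

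\textbf{Main obstacle.} The delicate part is not the algebra but making the coupled limits precise and mutually consistent: $\epsilon \rightarrow 0$, $1 \ll B \ll N$, and $f_\omega(x_i) \rightarrow P(Y|x_i)$ all interact, and one must justify that the stationary measure of the discrete, non-Gaussian minibatch recursion genuinely converges to that of the limiting Langevin SDE. Concretely this requires controlling (a) the higher cumulants of $\beta$, which are nonzero at any finite $B$, and (b) the error in replacing $\Sigma_{x,y}(\omega)$ by $F_x(\omega)$, which combines model misspecification $f_\omega \neq P(Y|X)$ with finite-sample fluctuation of the empirical Fisher. It is also worth stressing that the ``near local minima'' hypothesis is doing genuine work here: away from minima the neglected $dF_x/d\omega$ terms do not vanish, which is exactly the deficiency that the proposed stochastic NGD optimiser of Equation \ref{eqn:sngd} is built to correct.
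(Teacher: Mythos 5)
Your proposal follows essentially the same route as the paper's proof: central limit theorem for Gaussianity of $\beta$, the without-replacement covariance with prefactor $\frac{1}{B}\left(1-\frac{B}{N}\right)$, identification of the per-example gradient covariance $\Sigma_{x,y}(\omega)$ with $F_x(\omega)$ via the zero-mean score property when $f_\omega(x_i) \rightarrow P(Y|x_i)$, stationarity of the Fisher near the minimum, and then reduction to Theorem 1. Your finite-population factor $\frac{1}{B}\left(1-\frac{B-1}{N-1}\right)$ is in fact the more careful version of the paper's computation (whose stated prefactor $\frac{N}{B}\left(1-\frac{B}{N}\right)$ is only consistent with the claimed temperature up to an apparent normalisation slip), and your closing remarks on the coupled limits correctly identify the gaps the paper also leaves informal.
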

\begin{proof}
Since the gradients of individual training examples are independent,
\begin{equation}
\mathbb{E}\left( \frac{d \mathcal{L}(\omega,x_i,y_i)}{d\omega} \frac{d \mathcal{L}(\omega,x_j,y_j)}{d\omega^\top} \right) = \frac{1}{N^2}  \left( \sum_{k=1}^N \frac{d\mathcal{L}(\omega,x_k,y_k)}{d\omega} \right) \left( \sum_{k=1}^N \frac{d\mathcal{L}(\omega,x_k,y_k)}{d\omega^\top} \right) + \Sigma_{x,y}(\omega) \delta_{ij} \label{eqn:long}.
\end{equation}
$\Sigma_{x,y}(\omega)$ denotes the empirical gradient covariances. Applying the central limit theorem over examples in the minibatch, we conclude the gradient noise $\beta$ is a Gaussian random variable. By direct substitution $\mathbb{E}( \beta ) = 0$, while the covariance $\mathbb{E}( \beta \beta^\top )= \frac{N}{B}(1- \frac{B}{N}) \Sigma_{x,y}(\omega)$. To compute $\Sigma_{x,y}(\omega)$, we sum Equation \ref{eqn:long} over the indices $(i,j)$,
\begin{eqnarray}
\Sigma_{x,y}(\omega) &=& F_{x,y}(\omega) - \frac{1}{N^2}  \left( \sum_{k=1}^N \frac{d\mathcal{L}(\omega,x_k,y_k)}{d\omega} \right) \left( \sum_{k=1}^N \frac{d\mathcal{L}(\omega,x_k,y_k)}{d\omega^\top} \right).
\end{eqnarray}
As $N\rightarrow \infty$, $\Sigma_{x,y}(\omega) \rightarrow \Sigma(\omega)$, the covariance of the underlying data distribution. Meanwhile under mild regularity conditions, $\mathbb{E}_{Y\sim f_{\omega}(x_i)}\left( \frac{d\mathcal{L}(\omega,x_i,Y)}{d\omega} \right) = 0$, and consequently if $f_{\omega}(x_i) \rightarrow P(Y|x_i)$ for all $x_i \in x$, we also obtain $F_x(\omega) \rightarrow \Sigma(\omega)$. Since we assumed $F_x(\omega) \succ 0$ this implies $\Sigma(\omega) \succ 0$, which ensures the gradient noise cannot vanish and prevents the stationary distribution from collapsing to a fixed point. Finally we note that near local minima, the Fisher metric $F_{x}(\omega) \rightarrow F_x(\omega_0)$ is stationary. Comparing Equations \ref{eqn:static} and \ref{eqn:sngd_step} and following Theorem 1, we obtain Theorem 4.
\end{proof}
This remarkable result shows that as the predictions of the model on the training set grow closer to the true conditional distribution of labels given inputs, the stationary distribution of minibatch NGD approaches a Bayesian posterior near local minima at temperature $T  \approx \epsilon N/(2B)$ (at least as $\epsilon \rightarrow 0$). A similar analysis was proposed by \citet{ahn2012bayesian}, while \cite{mandt2017stochastic} proposed to directly precondition the gradient by the empirical covariances. Both these works also require that the Fisher is stationary. In order to extend our analysis to non-stationary Fisher matrices, we replace the conventional minibatch NGD step of Equation \ref{eqn:sngd_step} by our novel SNGD step proposed in Equation \ref{eqn:sngd}.

\begin{theorem}
We assume both $C_{x,y}(\omega)$ and $F_{x}(\omega)$ are Lipschitz continuous everywhere, and that the Fisher information matrix $F_x(\omega)$ is positive definite. Training inputs are drawn from a fixed distribution $P(X)$, while labels are assigned by a conditional distribution $P(Y|X)$. If $\epsilon \rightarrow 0$, $1 \ll B \ll N$, and $f_{\omega}(x_i) \rightarrow P(Y|x_i)$ for all observed $x_i \in x$, repeated application of Equation \ref{eqn:sngd} will draw samples from $P_{t\rightarrow \infty}(\omega) \propto e^{-C_{x,y}(\omega)/T} |\det{F_x(\omega)}|^{1/2}$, where $T = \frac{\epsilon N}{2B}(1 -\frac{B}{N})$.
\end{theorem}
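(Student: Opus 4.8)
The plan is to recognise Equation~\ref{eqn:sngd} as a special case of the preconditioned Langevin update of Equation~\ref{eqn:massive}, with the static metric replaced by the Fisher information $G_{x,y}(\omega) = F_x(\omega)$ and with temperature $T = \frac{\epsilon N}{2B}(1-\frac B N)$, and then to invoke Theorem~2. Two things must be checked along the way: first, that the deterministic correction terms of Equation~\ref{eqn:sngd} are exactly the metric-derivative drift terms of Equation~\ref{eqn:massive} at this temperature; and second, that the minibatch gradient noise $\beta$ of Equation~\ref{eqn:minibatch} plays, in the stated limit, the role of the Gaussian source $\alpha$ that Theorem~2 demands, i.e. $\mathbb{E}(\alpha) = 0$ and $\mathbb{E}(\alpha\alpha^\top) = 2T F_x(\omega)/(\epsilon N)$.

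The first check is a routine term-by-term comparison. Writing the minibatch gradient as $\frac1N\frac{d\hat C}{d\omega} = \frac1N\frac{dC}{d\omega} + \beta$, the leading term of Equation~\ref{eqn:sngd} is the leading term of Equation~\ref{eqn:massive} with $\alpha = \beta$ and $G_{x,y} = F_x$. The remaining two terms of Equation~\ref{eqn:sngd} carry the prefactor $\frac{\epsilon^2}{2B}(1-\frac B N)$, while the corresponding terms of Equation~\ref{eqn:massive} carry the prefactors $\frac{\epsilon T}{N}$ and $\frac{\epsilon T}{2N}$; both pairs agree precisely when $\frac{\epsilon T}{N} = \frac{\epsilon^2}{2B}(1-\frac B N)$, that is, when $T = \frac{\epsilon N}{2B}(1-\frac B N)$. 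So at this temperature Equation~\ref{eqn:sngd} is literally Equation~\ref{eqn:massive} with $G_{x,y} = F_x$, apart from the statistics of the noise.

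For the second check I would reuse the computation in the proof of Theorem~4 essentially verbatim, since the noisy leading term of Equation~\ref{eqn:sngd} is identical to that of the minibatch NGD step of Equation~\ref{eqn:sngd_step}. Because $1 \ll B$, the central limit theorem over the minibatch makes $\beta$ asymptotically Gaussian; direct substitution gives $\mathbb{E}(\beta) = 0$, and its covariance is a fixed multiple of the empirical gradient covariance $\Sigma_{x,y}(\omega)$. As $N\to\infty$ with $B \ll N$, $\Sigma_{x,y}(\omega) \to \Sigma(\omega)$; since $\mathbb{E}_{Y\sim f_\omega(x_i)}(d\mathcal{L}/d\omega) = 0$ under mild regularity, the hypothesis $f_\omega(x_i) \to P(Y|x_i)$ forces $\Sigma(\omega)\to F_x(\omega)$, and $F_x \succ 0$ then yields $\Sigma(\omega)\succ 0$, so the noise cannot vanish and the stationary distribution does not collapse to a point mass. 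In this limit $\mathbb{E}(\beta\beta^\top)$ coincides with the value $2T F_x(\omega)/(\epsilon N)$ required by Theorem~2 at the temperature found above --- this is exactly the content of Theorem~4, now applied with Equation~\ref{eqn:massive} in place of Equation~\ref{eqn:static}. Combining the two checks, Equation~\ref{eqn:sngd} satisfies the hypotheses of Theorem~2 with $G_{x,y} = F_x(\omega)$, and Theorem~2 then gives $P_{t\to\infty}(\omega)\propto e^{-C_{x,y}(\omega)/T}|\det F_x(\omega)|^{1/2}$, as claimed.

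The hard part will be making the limiting argument rigorous rather than formal. The conclusion rests on three coupled idealisations --- $\epsilon\to 0$, $1 \ll B \ll N$, and $f_\omega(x_i)\to P(Y|x_i)$ --- and one must fix the order in which they are taken: $\beta$ is only asymptotically Gaussian, with non-Gaussian corrections of order $1/\sqrt B$, and its covariance only approaches $F_x(\omega)$ up to the errors $\Sigma_{x,y}-\Sigma$ and $\Sigma - F_x$, so all of these must be shown to decay fast enough not to disturb the $O(\sqrt\epsilon)$ noise scale on which the derivation of Equation~\ref{eqn:massive} depends. That derivation --- the careful discretisation of a stochastic differential equation on a non-stationary metric, where the $O(\sqrt\epsilon)$ noise feeds back into the $O(\epsilon)$ drift through exactly the correction terms that distinguish SNGD from plain minibatch NGD --- is itself the delicate point flagged before Equation~\ref{eqn:massive}, and is handled in the appendix treatment of Theorem~2; the remaining work here is the term-by-term identification above and the appeal to Theorem~2 in place of Theorem~1.
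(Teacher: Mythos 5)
Your proposal is correct and follows essentially the same route as the paper, whose proof of this theorem consists precisely of observing that Equation~\ref{eqn:sngd} is Equation~\ref{eqn:massive} with $G_{x,y}(\omega) \rightarrow F_x(\omega)$ and $T \rightarrow \frac{\epsilon N}{2B}(1-\frac{B}{N})$, and then combining the proofs of Theorems~2 and~4. Your explicit matching of the correction-term prefactors and your recapitulation of the minibatch-noise covariance argument simply spell out the details the paper leaves implicit.
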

\begin{proof}
This result follows directly from the proofs of Theorems 2 and 4. Notice that Equation \ref{eqn:massive} reduces to Equation \ref{eqn:sngd} if one sets $G_{x,y}(\omega) \rightarrow F_x(\omega)$ and $T \rightarrow \frac{\epsilon N}{2B} (1 - \frac{B}{N})$. 
\end{proof}

As discussed in Section 4, the implicit bias $|\det{F_x(\omega)}|^{1/2}$ in the stationary distribution of Equation \ref{eqn:sngd} introduces a parameterisation invariant Jeffreys prior over the parameters. Notice that we could also exploit Equation \ref{eqn:flat} and Theorem 3 to draw (approximate) posterior samples under a uniform prior.

Finally, we note that the metric must be positive definite, while the Fisher is positive semi-definite. To resolve this, it is common practice to introduce the modified metric $G_x(\omega) = F_x(\omega) + \delta I$ \citep{martens2014new}. The ``Fisher damping'' $\delta$ imposes a trust region, ensuring the eigenvalues of the pre-conditioner $\epsilon (F_x(\omega) + \delta I)^{-1}$ are bounded by $\epsilon/\delta$. We therefore expect that we will be able to achieve stable training at larger learning rates by increasing $\delta$, while we converge to exact SNGD as $\delta \rightarrow 0$. The Fisher damping will also modify the implicit prior and break parameterisation invariance.
\section{Experiments}

In the following, we provide experiments analysing the behaviour of conventional minibatch NGD in the light of Theorem 4. We leave empirical analysis of our novel ``SNGD'' update rule to future work.

\subsection{Comparing preconditioned Langevin dynamics and stochastic NGD}

\begin{figure}[t]
    \centering
        \subfloat[]{\includegraphics[width=0.4\columnwidth]{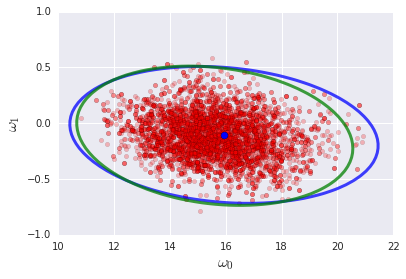}}
        \qquad
        \subfloat[]{\includegraphics[width=0.4\columnwidth]{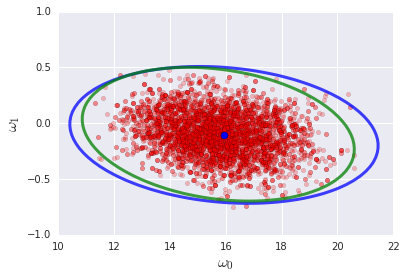}}
    \caption{Samples from (a) preconditioned Langevin dynamics at $T=1$ and (b) NGD when the batch size $B = \epsilon N/2$. We plot the iterates in red, the Laplace approximation to the posterior in blue and the covariance of the iterates in green (to 3 standard deviations). As predicted, the stationary distributions of preconditioned Langevin dynamics and stochastic NGD are remarkably similar.}
\end{figure}

We first consider a binary logistic regression task, with exclusive labels $y_i \in \{-1,1\}$ and inputs $x_i$ drawn from an n-dimensional unit Gaussian. Our model infers labels via $P(y_i| x_i, \omega) = 1/(1+ e^{y_i \omega^\top x})$. In Figure 1a, we show the distribution of samples from preconditioned Langevin dynamics, for which we set the metric $G_{x,y}(\omega)$ equal to the Fisher information $F_{x}(\omega)$, and we evaluate the gradient over the entire training set. To set the temperature, we explicitly add Gaussian noise $\alpha$ to the gradient of variance $\mathbb{E}(\alpha \alpha^\top) = 2TF_x(\omega)/(\epsilon N)$, as described in Section 4\footnote{However, for simplicity we do not include the additional correction terms in Equation \ref{eqn:massive}, instead following the simpler update rule of Equation \ref{eqn:static}, despite the parameter dependence of the Fisher metric.}. Our training set comprises 1000 examples and we set the input dimensionality $n=2$. We sample the training set labels of each input from the Bernoulli distribution of the ``true'' parameter values $\omega_{true} = (16,0)$. The learning rate $\epsilon = 0.1$ and the Fisher damping $\delta = 10^{-4}$. We first run 5000 parameter updates to burn in, before sampling the following 5000 parameter values. The blue point denotes the cost function minimum, obtained by LBFGS, while the blue curve illustrates the width of the Bayesian posterior (to 3 standard deviations), estimated using the Laplace approximation. To confirm that the samples are drawn from a distribution close to the posterior, we plot the covariance of the samples in green. In Figure 1b we replace preconditioned Langevin dynamics by NGD. To set $T=1$, we simply estimate the gradient over minibatches of size $B = \epsilon N/2 = 50$. Batches are sampled randomly without replacement, and we continue to estimate the Fisher information over the full training set. As predicted, when we set the batch size correctly the stationary distribution of NGD is close to the posterior, drawing samples from the same distribution as preconditioned Langevin dynamics.

We now increase the difficulty of the task by setting the input dimensionality $n= 64$ and the true parameter values $w_{true} = (16,0,0,0,...,0)$, such that the input comprises one relevant feature and 63 irrelevant features. In Figure 2, we plot the mean test set accuracy and test cross entropy of preconditioned Langevin dynamics as we vary the temperature $T$. Once again, we set the learning rate $\epsilon = 0.1$ and the damping $\delta = 10^{-4}$, and we run 1000 parameter updates to burn in. In blue we plot the performance of a conventional ensemble average over the final 1000 parameter values, while in green we plot the accuracy of a single sample. The ensemble outperforms single parameter samples, and the test cross entropy of the ensemble is minimised when $T=1$. Meanwhile in Figure 3, we plot the test set accuracy and test cross entropy of NGD on the same task. We continue to estimate the Fisher over the full training set, but we sample the gradient using minibatches of size $B = \epsilon N/(2T)$. As predicted by our theoretical analysis, the performance of NGD is remarkably similar to preconditioned Langevin dynamics, and the test cross entropy is minimised at $T=\epsilon N/(2B) = 1$.

\begin{figure}[t]
    \centering
        \subfloat[]{\includegraphics[width=0.4\columnwidth]{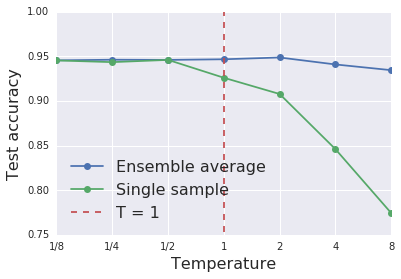}}
        \qquad
        \subfloat[]{\includegraphics[width=0.4\columnwidth]{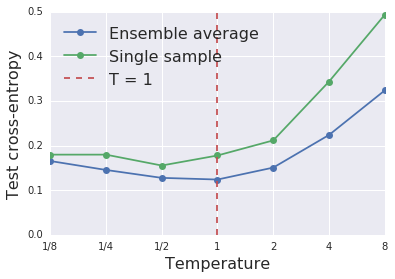}}
    \caption{Preconditioned Langevin dynamics for logistic regression. The test set accuracy (a) and test cross-entropy (b), as a function of the sampling temperature $T$, which we set by adding Gaussian noise to the gradient. The test cross-entropy of the ensemble is minimised at $T=1$.}
\end{figure}

\begin{figure}[t]
    \centering
        \subfloat[]{\includegraphics[width=0.4\columnwidth]{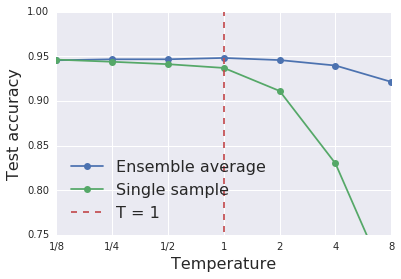}}
        \qquad
        \subfloat[]{\includegraphics[width=0.4\columnwidth]{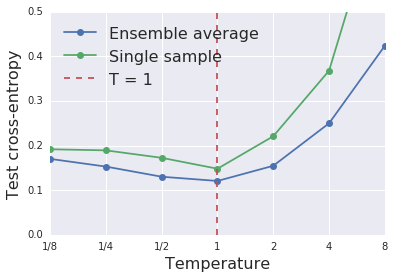}}
    \caption{NGD for logistic regression. The test set accuracy (a) and test cross-entropy (b), as a function of the sampling temperature $T$, which we control by setting the batch size $B = \epsilon N/(2T)$. As predicted, the performance of the ensemble is similar to preconditioned Langevin dynamics.}
\end{figure}

\subsection{Stochastic NGD and an MLP}

To confirm that our conclusions are relevant to non-convex settings we now apply NGD to train a simple MLP on MNIST. Our model has a single hidden layer with 40 hidden units and RELU activations. To reduce the number of parameters, we use a matrix whose elements are drawn randomly from the unit Gaussian to project the input features down to 10 dimensions, and to emphasize the influence of the temperature on training we reduce the training set size to $N = 1024$. We also found it was necessary to include additional regularisation to ensure the samples converge to a stationary distribution. We therefore introduce L2-regularization with regularization coefficient $\lambda = 20.0/N$. We increase the Fisher damping to $\delta = 0.1$ to ensure stability, while the learning rate $\epsilon = 1/8$. In Figure 4a, we plot the test set accuracy of NGD as a function of the batch size used to estimate the gradient. The Fisher is estimated over separate batches of $1024$ images, using a single sampled label per example. For each batch size, we perform 500 gradient updates to burn in, before sampling the parameters over a further 500 updates. We plot both the performance of the ensemble average across the 500 samples, as well as the mean accuracy of a single sample. When training with an ensemble, the accuracy increases rapidly as we increase the batch size until we reach $B = \epsilon N/2 = 64$, at which the temperature $T=1$, while above this temperature the accuracy drops\footnote{We note that the optimum temperature may depend on the damping coefficient or the regulariser, although we would usually expect this dependence to be relatively weak.}. In Figure 4b, we exhibit the test set cross-entropy. While the mean test cross-entropy of a single sample always rises as we reduce the batch size, the ensemble test cross-entropy also shows a minimum at $B = 64$. 

\begin{figure}[t]
    \centering
        \subfloat[]{\includegraphics[width=0.4\columnwidth]{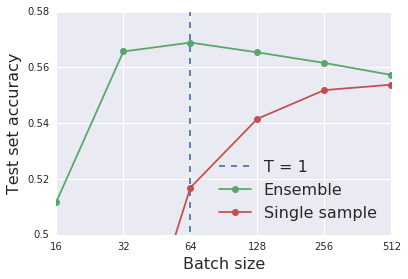}}
        \qquad
        \subfloat[]{\includegraphics[width=0.4\columnwidth]{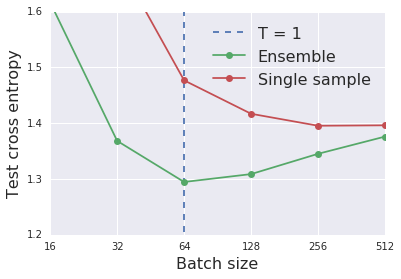}}
    \caption{The test set accuracy (a) and test cross entropy (b) of NGD as a function of batch size $B$, when training an MLP on $10$ randomly projected input features from MNIST with $N=1024$. We also provide the batch size $B = 2N/\epsilon$ at which the temperature $T = 1$. We note that the peak test accuracy is very low, since this task is substantially harder than classifying a complete MNIST image.}
\end{figure}

\begin{figure}[t]
    \centering
        \subfloat[]{\includegraphics[width=0.4\columnwidth]{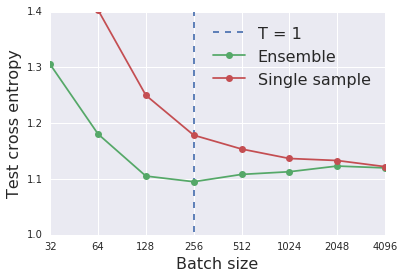}}
        \qquad
        \subfloat[]{\includegraphics[width=0.4\columnwidth]{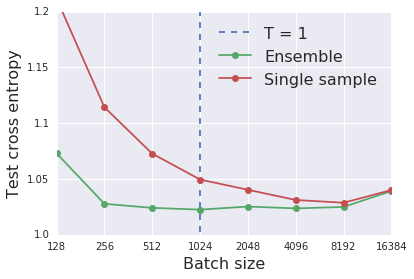}}
        
    \caption{The test cross entropy of NGD as a function of batch size $B$, when training an MLP on $10$ randomly projected input features from MNIST. a) The training set size $N = 4096$. The cross-entropy of the ensemble shows a minimum at $B = 256$, for which $T = \epsilon N/(2N) = 1$. b) The training set size $N = 16384$. As the training set size increases, the minimum near $T \approx 1$ is increasingly flat.}
\end{figure}

In Figure 5a we exhibit the test cross entropy for the same model when we increase the training set size to $N=4096$. Now the cross entropy of the ensemble shows a minimum at an increased batch size of $B = 256$. Once again, this matches the Bayesian prediction. Notice that this leads to a linear scaling rule between the batch size and the training set size ($B \propto N$) as was previously observed for SGD by \cite{smith2017understanding}. In Figure 5b, we further increase the training set size to $N = 16384$. The minimum in the test cross entropy becomes increasingly flat as the training set grows.

\begin{figure}[t]
    \centering
        \subfloat[]{\includegraphics[width=0.37\columnwidth]{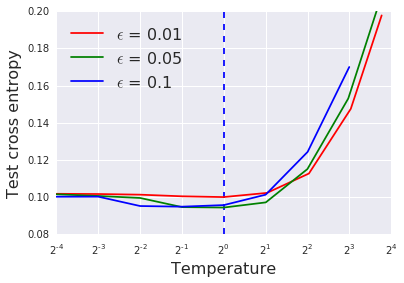}}
        \qquad
        \subfloat[]{\includegraphics[width=0.5\columnwidth]{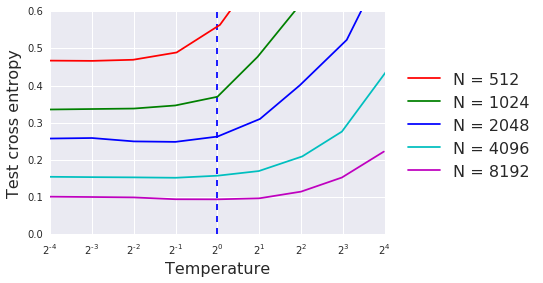}}
        
    \caption{The test cross entropy of NGD as a function of temperature $T = \frac{\epsilon N}{2B} \left(1 - \frac{B}{N} \right)$, when training a CNN on MNIST. a) The training set size $N=8192$, and we plot the test cross entropy as a function of temperature $T$, for a range of learning rates $\epsilon$. b) The learning rate $\epsilon = 0.05$, and we plot the test cross entropy as a function of temperature for a range of training set sizes $N$.}
\end{figure}

\subsection{Stochastic NGD and a CNN}

In Figure 6, we apply NGD to train a CNN on MNIST. Our model is comprised of 3 convolutional layers and 1 fully connected softmax layer. Each convolutional layer has a $(3,3)$ spatial window, 10 output channels and a stride of $2$. This results in a total of $3490$ trainable parameters ($1890$ for the convolutional layers and $1600$ for the final fully connected layer). We introduce L2 regularization and set $\lambda = 40.0/N$. The Fisher information is estimated during training using independent minibatches of 1024 examples, using a single sampled label per example. To reduce variance, we store a moving average of the Fisher over previous updates, and we set the smoothing coefficient of the Fisher moving average, $\alpha = 0.95^{(\epsilon/0.001)}$. The Fisher damping $\delta = 0.1$. To reduce the burn in time, we initialise the weights at the start of training from the final parameters of a single SGD training run on the full MNIST training set. We average our predictions over ensembles sampled from the final $0.2\%$ of gradient updates at the end of training, and we perform $[850/\epsilon]$ gradient updates per training run.

In Figure 6a, we plot the test cross entropy when the training set size, $N = 8192$, for a range of learning rates $\epsilon$. We set the temperature by choosing the batch size $B = \left[N/(\frac{2T}{\epsilon} + 1) \right]$. As expected, the test cross entropy at constant temperature is largely independent of the learning rate, indicating that stochastic NGD obeys a linear scaling rule between batch size and learning rate, $B \propto \epsilon$ when $B \ll N$, as already observed by many authors for SGD \citep{goyal2017accurate, smith2017dont, balles2016coupling}. The test cross entropy for all learning rates rises rapidly once $T > 1$.

In Figure 6b, we exhibit the test cross entropy when $\epsilon = 0.05$ for a range of training set sizes. The test cross entropy rises as the size of the training set falls, but we consistently observe that the test cross entropy begins to increase rapidly once $T \gtrsim 1$. We note however that the minimum does exhibit a weak shift towards smaller temperatures for smaller training sets. This may reflect the breakdown of the approximations behind Theorem 4, since the samples from minibatch NGD will lie increasingly far from the local minimum as the training set size is reduced, and additionally the model predictions will be increasingly far from the true conditional distribution between inputs and labels.

\section{Conclusions}

We prove that, if the model predictions on the training set approach the true conditional distribution between inputs and labels, the stationary distribution of NGD approaches a Bayesian posterior near local minima as the learning rate $\epsilon \rightarrow 0$, at a temperature $T \approx \epsilon N/(2B)$, where $N$ denotes the training set size and $B$ the batch size. To confirm our claims, we demonstrate that samples from NGD at $B = \epsilon N/2$ are close to the Laplace approximation to the Bayesian posterior. We also find that the test cross entropy of ensembles sampled from NGD are minimised when $B \sim \epsilon N/2$. This confirms minibatch noise can improve the generalization performance of NGD, and suggests Bayesian principles may help predict the optimal batch size during training. Furthermore, we propose a novel algorithm, ``stochastic natural gradient descent", which draws parameterization invariant posterior samples with minibatch gradients by introducing the Jeffreys prior to the stationary distribution.

\subsubsection*{Acknowledgements}

We thank Martin Abadi, Matthew Johnson, Matt Hoffman, Roger Grosse, Yasaman Bahri and Alex Botev for helpful feedback, and Yann Ollivier for gently informing us of errors in our original proof.

\bibliography{iclr2018_conference}

\section*{Appendix}

In what follows $f_{ij} = f^{ij} = f_i^j$ and $|g|$ denotes $|\det{g}|$. The stochastic differential equation (SDE),
\begin{equation}
\frac{du}{dt} = f(u) + \eta(t) \label{eqn:example}
\end{equation}
is intended to describe the ordinary differential equation,
\begin{equation}
\frac{du}{dt} = f(u), \label{eqn:ordinary}
\end{equation}
perturbed by an uncorrelated Gaussian noise source $\eta(t)$ with mean $\mathbb{E}(\eta(t)) = 0$ and variance $\mathbb{E}(\eta(t)\eta^\top(t')) = b(u)b^\top(u) \delta(t-t')$. Equivalently, one could choose to define Equation \ref{eqn:example} as,
\begin{equation}
du^i = f^i(u) dt + b^{ij}(u) dW^j, \label{eqn:example2}
\end{equation}
where $dW^j$ is randomly sampled from the unit Gaussian. However it is important to note that, while Equation \ref{eqn:ordinary} uniquely defines the evolution of an ordinary differential equation, Equations \ref{eqn:example} or \ref{eqn:example2} do not uniquely define a stochastic differential equation. To understand this discrepancy, recall that there are many different numerical methods to approximate solutions to Equation \ref{eqn:ordinary} (``Euler'', ``Euler-Heun'', etc.). For an ordinary differential equation, all of these numerical methods converge to the same evolution for $u(t)$ as the step-size $\Delta t \rightarrow 0$. However, different numerical methods may converge to different probability distributions $P(u_t)$ when applied to Equation \ref{eqn:example2}.

Therefore, in order to describe a stochastic process, we must define a stochastic differential equation, in the form of Equation \ref{eqn:example2}, and also specify an ``interpretation'' of that SDE. This interpretation tells us which numerical methods can be used. The two most common interpretations are the \textit{Ito interpretation}, which uses the Euler method, and the \textit{Stratonovich interpretation}, which uses the Euler-Heun method. We will adopt the Ito interpretation, which can be discretized as follows,
\begin{eqnarray}
u_{n+1}^i &=& u_n^i + \epsilon f^i(u_n) + \sqrt{\epsilon} b(u_n)^{ij} dW^j ,\\
dW^j &\sim& \mathcal{N}(0,1).
\end{eqnarray}
For clarity, we have introduced the learning rate $\epsilon = \Delta t$. The evolution of the probability distribution $P(u_t)$ under the Ito interpretation is governed by the following Fokker-Planck equation,
\begin{equation}
\frac{\partial P}{\partial t} = - \frac{\partial}{\partial u^i} (f^i P) + \frac{1}{2} \frac{\partial^2}{\partial u^i \partial u^j} \left( \left(bb^\top\right)_{ij} P \right), \label{eqn:fokker}
\end{equation}
If there is a probability density $P_s(u)$ for which $\frac{\partial P_s}{\partial t} = 0$, then this defines the stationary distribution, and under mild conditions we expect $P_{t\rightarrow \infty}(u) = P_s$. 

In the Ito interpretation, Brownian motion on a Riemannian manifold is defined by the SDE,
\begin{equation}
d\omega^i = -\frac{1}{2} \left( \left(g^{-1} \frac{dC}{d\omega}\right)_i - |g|^{-1/2} \sum_j \frac{\partial}{\partial \omega^j} \left(g_{ij}^{-1} |g|^{1/2} \right) \right) dt + b_{ij}^{-1} dW^j,
\end{equation}
where $g(\omega) = b(\omega)b^\top(\omega)$ is positive definite. Applying the Euler method, we obtain the update rule,
\begin{eqnarray}
\Delta \omega^i &=& -\frac{\epsilon}{2} \left( \left(g^{-1} \frac{dC}{d\omega} \right)_i +  \sum_j \left(g^{-1} \frac{\partial g}{\partial \omega^j} g^{-1} \right)_{ij} - \frac{1}{2} \sum_j g_{ij}^{-1} \mathrm{Tr} \left(g^{-1} \frac{\partial g}{\partial \omega^j} \right) \right) \nonumber \\
&& + \sqrt{\epsilon } b_{ij}^{-1} dW^j.  \label{eqn:semon}
\end{eqnarray}
This is equivalent to Equation \ref{eqn:massive} from the main text at temperature $T=1$, if we redefine $\epsilon \rightarrow 2\epsilon/N$. Note that to obtain Equation \ref{eqn:semon}, we applied the identity,
\begin{equation}
|g|^{-1/2} \partial_j \left( g^{-1} |g|^{1/2} \right) = - g^{-1} \left( \partial_j g \right) g^{-1} + \frac{1}{2} g^{-1} \mathrm{Tr} \left( g^{-1} \partial_j g \right).
\end{equation}
One can confirm by direct substitution into Equation \ref{eqn:fokker} that the stationary distribution of Equation \ref{eqn:semon} satisfies $P_s = e^{-C(\omega)}|g|^{1/2}$. Finally we generalise this result to arbitrary temperatures by setting $C\rightarrow C'/T$ and $\epsilon \rightarrow \epsilon'T$, thus proving Theorem 2\footnote{We note that the earlier derivation of Equation \ref{eqn:semon} by \citet{girolami2011riemann} differs by a factor of 2. They  claim their scheme samples the uniform prior, although we have confirmed empirically this is not the case.}.

\end{document}